\newcommand\blfootnote[1]{%
  \begingroup
  \renewcommand\thefootnote{}\footnote{#1}%
  \addtocounter{footnote}{-1}%
  \endgroup
}
\newsavebox\CBox
\def\textBF#1{\sbox\CBox{#1}\resizebox{\wd\CBox}{\ht\CBox}{\textbf{#1}}}
\begin{document}
%
\title{What Does the Gradient Tell When Attacking the Graph Structure}
%
%

\author{Zihan Liu\inst{1,2} \and
Ge Wang\inst{1,2} \and
Yun Luo\inst{1,2}\and Stan Z. Li\inst{2}$^\dagger$}
\institute{Zhejiang University, Hangzhou, China \and
AI Lab, Research Center for Inductries of the Future, Westlake University}
\authorrunning{Liu et al.}
%
\maketitle              

\begin{abstract}

Recent research has revealed that Graph Neural Networks (GNNs) are susceptible to adversarial attacks targeting the graph structure. 
A malicious attacker can manipulate a limited number of edges, given the training labels, to impair the victim model's performance.
Previous empirical studies indicate that gradient-based attackers tend to add edges rather than remove them. 
In this paper, we present a theoretical demonstration revealing that attackers tend to increase inter-class edges due to the message passing mechanism of GNNs, which explains some previous empirical observations. 
By connecting dissimilar nodes, attackers can more effectively corrupt node features, making such attacks more advantageous. 
However, we demonstrate that the inherent smoothness of GNN's message passing tends to blur node dissimilarity in the feature space, leading to the loss of crucial information during the forward process.
To address this issue, we propose a novel surrogate model with multi-level propagation that preserves the node dissimilarity information.
This model parallelizes the propagation of unaggregated raw features and multi-hop aggregated features, while introducing batch normalization to enhance the dissimilarity in node representations and counteract the smoothness resulting from topological aggregation. 
Our experiments show significant improvement with our approach. 
Furthermore, both theoretical and experimental evidence suggest that adding inter-class edges constitutes an easily observable attack pattern.
We propose an innovative attack loss that balances attack effectiveness and imperceptibility, sacrificing some attack effectiveness to attain greater imperceptibility. We also provide experiments to validate the compromise performance achieved through this attack loss.

\keywords{Graph adversarial attack  \and Graph structure attack \and Node-level classification.}

\end{abstract}

\section{Introduction} \label{sec_intro}
\blfootnote{$^\dag$Corresponding author.}
Graph Neural Networks (GNNs) \cite{kipf2016semi,velickovic2017graph}, a family of deep learning frameworks, are broadly used due to their advantages in processing structure-based data \cite{zhou2020graph}.
GNNs have been widely used in real-world applications, such as social networks \cite{guo2020deep}, traffic networks \cite{bui2021spatial}, and recommendation systems \cite{wu2020graph}, leading to concerns about GNNs' reliability.
In recent years, researchers have pointed out that inconspicuous perturbations in the graph structure can mislead the prediction of GNNs \cite{dai2018adversarial}.
As one of the mainstream approaches to test model robustness, the study of graph adversarial attacks has received more and more attention.

Graph Structure Attack (GSA) aims to misguide the model predictions by perturbing a limited number of edges.
In the scenario of gray-box attacks, the attacker constructs a surrogate model to carry out the attack and afterward transfers the attack to unknown victim models \cite{sun2018adversarial}.
One of the most popular attack methods on graph structure relies on gradient from a surrogate GNN model \cite{chen2020survey}.
Wu et al. \cite{wu2019adversarial} observe that gradient-based attackers constantly add edges instead of removing them.
Subsequently, other researchers have identified this phenomenon  \cite{jin2020graph,pham2020graph,tang2020transferring}.
They explain that the attacker tends to connect nodes with high feature dissimilarity and uses this as a basis to propose the removal of edges between low similarities as a defensive measure.
Tang et al. \cite{tang2020transferring} extend the interpretation of \cite{wu2019adversarial}. 
They argue that adding edges tends to pollute the sharply transmitted information, yet deleting edges only leads to the loss of some information.
Zhu et al. \cite{zhu2022does} notice that the structure attackers leads to reduced homophily in a targeted attack with a sufficient attack budget.

Although there have been some empirical findings regarding GSA, definitive conclusions could not be drawn.
In order to better reveal the mechanism of GSA, this paper first presents a theoretical analysis of the gradient-based GSAs, which is one of the mainstream methods.
We find that for gradient-based attackers, the gradient is more significant among unconnected nodes from different classes.
Since attackers perform greedy attacks based on gradient saliency, GSA methods, such as Metattack~\cite{zugner2019adversarial}, GraD~\cite{liu2022towards}, consume almost the entire attack budget on adding the inter-class edges (an inter-class edge describes the connection state between two nodes from different classes).
In contrast, removing inter-class edges or adding or removing intra-class edges are attack types that hardly appear.
From our demonstration, attackers' tendency is inseparable from the message passing of GNNs, i.e., aggregation.
Information aggregation in GNNs has been shown to lead to oversmoothing between connected nodes \cite{cai2020note,li2018deeper,zhao2019pairnorm}.
It has the advantage of extracting low-frequency information of connected and similar nodes through intra-class edges \cite{bo2021beyond}.
However, inter-class edges render the aggregation a burden.
The information fusion between nodes from different classes causes the local subgraph to deviate from the distribution manifold, thus causing instability in the training process.
Combined with our demonstration, gradient-based attackers exploit this property to express the effectiveness of the attack performance.
Moreover, as mentioned in \cite{wu2019adversarial}, the nodes connected by the edges from the attacker tend to differ in their features.
This also provides empirical support for the effectiveness of the added interclass edges.
The observation from \cite{zugner2018adversarial} that nodes with a high degree are less vulnerable to attacks can also be explained by the fact that adding a new edge under the aggregation mechanism has less effect on high degree nodes.

Inspiredly, the attacker expects to find nodes with higher dissimilarity through the surrogate model and connect them as a suitable perturbation.
Previous gray-box attack methods simply use GCN as a surrogate model \cite{lin2020exploratory,liu2022gradients,liu2022towards,zugner2019adversarial}.
However, we theoretically elaborate that the aggregation of node features leads to the lose of dissimilarity information between nodes in the forward process.
In order to improve the surrogate model, dissimilarity information needs to be propagated under the condition that performance is preserved.
To achieve this, we propose a novel surrogate model with multi-level propagation.
In a graph convolutional layer, we propagate the unaggregated features and the multi-hop aggregated features in parallel.
The former propagates high-frequency information containing node dissimilarities, while the latter propagates low-frequency information to maintain the discriminative performance.
In addition, we introduce batch normalization for the aggregated features to further extend dissimilarity and combat oversmoothness.
Experimental results are provided to verify the improvement of our proposed surrogate model compared with previous methods.

Based on our validation of the effectiveness of adding inter-class edges, we aim to investigate the influence of attack tendency on the imperceptibility. 
In recent studies, graph homophily has been proposed as a measure of imperceptibility \cite{chen2022understanding}. Specifically, among the four attack types mentioned above, adding inter-class edges is the most efficient way to decrease graph homophily, thereby posing a challenge to the imperceptibility of attacks.
To address this issue, we propose a novel attack loss to mitigate the significant impact of the attack on graph homophily. 
The attack loss is designed based on two components, the homophily ratio and pseudo-labels, to interfere with the attacker's greedy decision-making process by influencing the gradient saliency perceived by the attacker. 
Experiments show that the cost of increasing imperceptibility about homogeneity is the loss of attack performance. Ablation experiments and homogeneity rate change analysis are provided to better demonstrate the trade-off between imperceptibility and attack performance.

The main contribution is summarized as follows:

\begin{itemize}
    \item We theoretically demonstrate that gradient-based GSAs tend to increase inter-class margins, which is caused by the aggregation mechanism of GNNs, providing support for empirical observations in previous works.
    \item We consider that adding edges between highly dissimilar nodes can more effectively corrupt the features. However, we have demonstrated that the propagation process of existing surrogate models blurs the information of node dissimilarity. We propose a surrogate model with multi-level propagation to preserve the dissimilarity information while maintaining performance.
    \item We propose a novel homophily-based attack loss to intervene attacker's decisions, aiming to sacrifice a small amount of attack performance in exchange for improving imperceptibility to homophily.

\end{itemize}

\section{Preliminaries} \label{sec_pre}

\subsection{Notations} \label{sec_pre_not}
For a attribute graph $\mathcal{G}$, it can be represented as $\mathcal{G} = (\mathcal{V},\mathcal{E},X)$, where $\mathcal{V}=\{v_1,v_2,... ,v_n\}$ is the set of $N$ nodes, $\mathcal{E} \subseteq {\mathcal{V}}\times{\mathcal{V}}$ is the edge set.
GNNs are expected to predict the class of unlabeled nodes using the known labels in node-level classification tasks under semi-supervision.
The attribute graph has a node feature matrix $X$ and a subset of the labeled nodes with the label set of $Y$.
Each sample is mapped as a node $v_i$ on the graph with its feature $x_i\in{\mathbb{R}^{d}}$ and label $y_i\in{\mathbb{R}^{k}}$, where $k$ is the number of classes.
A binary adjacent matrix $A \in \{0,1\}^{N\times N}$ is denoted to describe the edges between nodes, where $A_{i,j}=1$ if $(i,j)\subseteq \mathcal{E}$.
The mapping function of a GNN model is noted as $f_{\theta}$, while the prediction of $f_{\theta}$ on each node $v_i$ is represented by a distribution $z_i$.
All the notations with $\mathcal{L}$ denote the loss functions.
Specific to attacks via edge perturbation, the perturbed graph is indicated as $\mathcal{G}^{(t)}=(A^{(t)},X)$, where $t$ represents the $t$-th time the attacker greedily modifies an edge in the graph.

\subsection{Gradient-based Attacker}\label{4.1}

For the edge perturbations on an undirected graph, the attack is limited by a budget $\Delta$, whose mathematical expression is:
\begin{equation} \label{eq_edge_budget}
    \lVert A^{(t)}-A^{(0)}\rVert_0 \leq 2\Delta,
\end{equation}
where $\lVert \cdot \rVert_0$ denotes the $\ell_0$ norm function.
Attacks on the graph structure only change a small number of edges, making the attack imperceptible.

Under the constraint elaborated by Equation \eqref{eq_edge_budget}, Zugner et al. \cite{zugner2019adversarial} first propose a greedy algorithm employed on the gradient-based attack model.
The greedy algorithm makes the attacker adopt only the edge with the most significant gradient values as the perturbation in each call to the gradient saliency.
Greedy edge perturbation at each stage is formulated as:
\begin{equation} \label{eq_bilevel_optim}
\begin{aligned}
    & \;\;\; A^{(t)}=\mathop{min}\limits_{\Vert A^{(t)}-A^{(t-1)}\Vert_0= 2} \mathcal{L}_{atk}(f_{\theta^{(t)}}(A^{(t)},X)) \\
    & s.t. \; {\theta}^{(t)}=\mathop{argmin}\limits_{\theta}\, \mathcal{L}_{train}(f_\theta(A^{(t-1)},X),Y),
\end{aligned}
\end{equation}
where $\theta^{(t)}$ denotes the parameter matrices of the surrogate model by minimizing a cross-entropy training loss $\mathcal{L}_{train}$.
The general form of $f_\theta$ is a standard Graph Convolutional Network (GCN):
\begin{equation}
    f_{\theta}(A,X)=softmax(\hat{A}\sigma{(\hat{A}XW^{(0)})}W^{(1)}),
\end{equation}
where $\hat{A}=\tilde{D}^{-\frac{1}{2}}(A+I)\tilde{D}^{-\frac{1}{2}}$ is the normalized adjacent matrix.
At stage $(t)$, the attacker selects one edge to modify (two for adjacency matrix due to symmetry), targeting to minimize the attack loss $\mathcal{L}_{atk}$.
There are formats of $\mathcal{L}_{atk}$, such as cross-entropy \cite{zugner2019adversarial}, minmax \cite{xu2019topology}.
In this paper, $\mathcal{L}_{atk}$ is set as a cross-entropy loss, which is formulated as:
\begin{equation}\label{eq_atk_loss}
    \mathcal{L}_{atk}=-\mathcal{L}_{ce}(f_{\theta^{(t)}}(A^{(t)},X),Y'_{test}),
\end{equation}
where $Y'_{test}$ is the pseudo-label of $f_{\theta^{0}}(A^{(0)},X)$ (i.e., the model trained from the clean graph).
The iterations continue until the perturbed graph satisfies $\lVert A^{(t)}-A\rVert_0 = 2\Delta$.
Minimizing $\mathcal{L}_{atk}$ is not a problem that can be solved by a forward process. 
As an approximation, the gradient, which is back-propagated through the attack loss, reflects the expectation of modification to the variation in $\mathcal{L}_{atk}$.
The calculation of gradient on the adjacent matrix $A^{(t)}$ is following:
\begin{equation}\label{eq_meta_grad}
    \mathcal{A}^{(t)}=\nabla_{A^{(t)}}\mathcal{L}_{atk}(f_{\theta^{(t)}}(\mathcal{G}^{(t)})),
\end{equation}
where $\mathcal{A}^{(t)}$ denotes the gradient at stage $(t)$, and $\nabla$ denotes the gradient operator.

On the graph structure, the edges take the value 0 or 1, which means that the perturbation can only cause the edge to change to a determined state.
When the value of an edge is 0, a positive gradient value means that adding this edge is a sound attack strategy; conversely, a negative gradient value means that removing this edge is a sound attack strategy.
The attacker greedily selects the edge with the most significant absolute gradient value from the sound attack strategy as the perturbation.
The expression for the perturbation at the moment (t+1) based on the gradient at the moment (t) is:
\begin{equation} \label{eq_dec_func}
     A^{(t+1)}-A^{(t)} = \mathop{argmax}\limits_{(i,j)} \; \mathcal{A}^{(t)}_{i,j}\;(1-2A^{(t)}_{i,j})),
\end{equation}
where $(1-2A^{(t)})$ converts the edge state from (0,1) to (1,-1), aiming to weight the gradient saliency. Note that at every iteration, the adjacent matrix is modified simultaneously at the symmetrical positions, i.e., $\lVert A^{(t+1)}-A^{(t)}\rVert_0=2$.
The attack process ends at the $\Delta^{th}$ iteration when $A^{(t)}$ satisfies $\lVert A^{(t)}-A^{(0)}\rVert_0 = 2\Delta$.

\section{Methodology} \label{sec_met}

\subsection{Insight into Gradient-based Attackers} \label{4.2}

\begin{figure}
    \centering
    \includegraphics[width = 0.57\hsize]{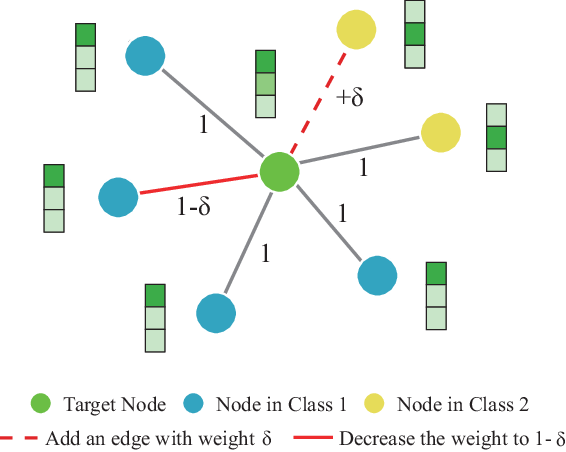}
    \caption{Illustration of the gradient-based attack method. $\delta$ is a minimal amount of perturbation that aims to minimize the prediction confidence on the target node. It can add an edge with weight $\delta$ or decrease the weight of an existing edge to $1-\delta$. }
    \label{fig1}
\end{figure}

We first construct a sub-graph scenario, as shown in Figure \ref{fig1}.
We assume that the label propagation algorithm (LPA) is used to infer the label of the intermediate target node (in green).
The target node is connected to $n_1$ nodes of class 1 and to $n_2$ nodes of class 2.
Prediction confidence in this paper is defined as the probability that a model inferences its prediction of a node sample.
Suppose $n_1>n_2$, according to LPA, the target node has the same label as the dominant category of its neighbors, i.e., class 1, and its prediction confidence is $\frac{n_1}{n_1+n_2}$.
Subsequently, we assume that the edges are non-discrete and try to maximize the probability of the target node being classified as class 1 with a perturbation of size $\delta$.
This small perturbation can be applied to the graph in four approaches: to reduce the weight of an edge connected to a node of class 1 or 2; to connect to a node of class 1 or 2 that is not connected.
Among these, we can easily distinguish that $\delta_{(1)}$: adding an edge connected to a class 2 node with weight $\delta$, and $\delta_{(2)}$: reducing the weight of an edge connected to class 1 to $1-\delta$ are the only two perturbations that can reduce the prediction confidence of the target node.
The influence of two perturbations on the target node's prediction confidence is:
\begin{equation}\label{eq_example}
    p(\delta_{(1)})=\frac{n_1}{n_1+n_2+\delta}\;,\;\;\;\;\;\;\;p(\delta_{(2)})=\frac{n_1-\delta}{n_1-\delta+n_2}\;,
\end{equation}
where $p(\cdot)$ denotes the prediction confidence at class 1.
To compare the attack performance of $\delta_{(1)}$ and $\delta_{(2)}$, we calculate the difference in prediction confidence due to the two perturbations in Equation \eqref{eq_example}. The result of the mathematical expression for $p(\delta_{(1)})-p(\delta_{(2)})$ is simplified as:
\begin{equation}\label{eq_example2}
    p(\delta_{(1)})-p(\delta_{(2)})=\frac{\delta n_2-\delta n_1+\delta^2}{(n_1+n_2)^2-\delta^2}.
\end{equation}
If $\delta \rightarrow0$, then we have $p(\delta_{(1)})< p(\delta_{(2)})$.
Notice that gradient can be demonstrated as the effect of the individual parameters on the objective, reducing prediction confidence in Figure \ref{fig1}.

\begin{lemma} \label{lem1}

Let $P_{\mathcal{L},\delta a}(k)$ be a perturbation function which is defined as $P_{\mathcal{L},\delta a}(k)\equiv\mathcal{L}(a_0,...,a_k+\delta a,...,a_D)-\mathcal{L}(a_0,...,a_k,...,a_D)$, where $a_k$ denotes the $k^{th}$ dimension and $\mathcal{L}$ denotes the attack target function. Then, if $\mathop{argmin}\limits_{k}(P_{\mathcal{L},\delta a}(k)) = d$, the gradient of $\mathcal{L}(a_0,...,a_k,...,a_D)$ w.r.t. $\{a_0,...,a_D\}$ has the minimum at $d^{th}$ dimension $a_d$.
\end{lemma}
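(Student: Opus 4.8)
The plan is to recognize Lemma~\ref{lem1} as a direct consequence of a first-order Taylor expansion, which formalizes the intuition from the worked example that for an infinitesimal step the discrete perturbation ranking and the gradient ranking coincide. First I would assume $\mathcal{L}$ is continuously differentiable in a neighborhood of the base point $a=(a_0,\dots,a_D)$ and expand each coordinate-wise perturbation as
\begin{equation}
P_{\mathcal{L},\delta a}(k)=\delta a\,\frac{\partial \mathcal{L}}{\partial a_k}(a)+R_k(\delta a),
\end{equation}
where the remainder satisfies $R_k(\delta a)=o(\delta a)$ as $\delta a\to 0$. This rewrites the object whose argmin defines $d$ purely in terms of the $k$-th partial derivative plus a lower-order correction.

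Next I would use that the index set $\{0,\dots,D\}$ is finite. Taking a positive step $\delta a>0$ and dividing, I obtain $P_{\mathcal{L},\delta a}(k)/\delta a=\partial_{a_k}\mathcal{L}(a)+R_k(\delta a)/\delta a$, and since there are only finitely many indices the correction $R_k(\delta a)/\delta a$ vanishes uniformly over $k$ as $\delta a\to 0$. Consequently, for all sufficiently small $\delta a>0$ the ordering of $\{P_{\mathcal{L},\delta a}(k)\}_k$ matches the ordering of $\{\partial_{a_k}\mathcal{L}(a)\}_k$; in particular the minimizing index is preserved, so $\mathop{argmin}_k P_{\mathcal{L},\delta a}(k)=\mathop{argmin}_k \partial_{a_k}\mathcal{L}(a)=d$. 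This is precisely the assertion that the gradient $\nabla_a\mathcal{L}$ attains its minimum in the $d$-th coordinate, which connects back to Equation~\eqref{eq_dec_func}: the greedy attacker's most effective single perturbation is the coordinate carrying the extremal gradient entry.

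The step requiring the most care is the well-posedness of ``preserving the argmin.'' I would make explicit that the minimizer is assumed unique (strict), so that a sufficiently small perturbation cannot reshuffle the leading index; if two coordinates share the same partial derivative the conclusion should be read up to ties. I would also emphasize the sign convention $\delta a>0$, matching the $\delta\to 0^{+}$ limit used in the worked example leading to Equation~\eqref{eq_example2}, because for a negative step the correspondence would instead map the argmin of $P$ to the argmax of the gradient. Under these mild regularity and genericity assumptions the proof reduces to the one-line Taylor estimate above combined with a finite comparison, so no heavier machinery is needed.
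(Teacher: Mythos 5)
Your proof takes essentially the same route as the paper's: divide $P_{\mathcal{L},\delta a}(k)$ by the positive step $\delta a$ (which preserves the argmin) and identify the resulting difference quotient with the partial derivative $(\nabla_a \mathcal{L})_k$. Your version is in fact tighter than the paper's, which simply writes the quotient as $\equiv \partial \mathcal{L}/\partial a_k$ with no limit argument, whereas you control the Taylor remainder uniformly over the finite index set and flag the two hypotheses (a strict minimizer and $\delta a>0$) that the paper leaves implicit.
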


\begin{proof}

According to the property of $\mathop{argmin}$, for a positive scaler $\delta{a}$, we have
\begin{equation}
\mathop{argmin}\limits_{k}(\frac{P_{\mathcal{L}, \delta a}(k)}{\delta a}) = d
\nonumber
\end{equation}

The inner part of $\mathop{argmin}$ can be viewed as the  partial derivative, i.e., 
\begin{equation} 
\begin{aligned}
\frac{P_{\mathcal{L}, \delta a}(k)}{\delta a} &= \frac{\mathcal{L}(a_1, ..., a_k+\delta a, ..., a_D) - \mathcal{L}(a_1, ..., a_k, ..., a_D)}{(a_k+\delta a)-(a_k)}\\ &\equiv \frac{\partial \mathcal{L}(a_1, ..., a_k, ..., a_D)}{\partial a_k} \equiv (\nabla_{a} \mathcal{L})_k
\nonumber
\end{aligned}
\end{equation} 
Thus, the gradient of $\mathcal{L}(a_1, ..., a_k, ..., a_D)$ w.r.t $a$ has the minimum at $d^{th}$ dimension of $a$.
\end{proof}

According to Lemma \ref{lem1} and Equation \ref{eq_example2}, the sub-graph in Figure 1 results in a larger gradient on the unconnected edge to nodes from class 2 (shown by the red dashed line) than a well-connected edge to a node from class 1 (shown by the solid red line). 
Thus, as stated in proposition \ref{propo1}, for an idealized graph, the optimal attack strategy generated by an attacker based on gradient tends to add edges that connect nodes with different labels.

\begin{proposition}  \label{propo1}
For a class of edge perturbation function $P=\{p| p:A {\tiny \rightarrow} A\}$, $A=\{A_{n \times n}|A_{i,j} {\tiny \in} \{0,1\}\}$, the optimal perturbation function follows \\$p^*=\mathop{argmax}\limits_{p{\tiny \in}P} \mathcal{L}(p(A))$. The optimal policy found by the gradient-based attacker obeys:

\begin{equation} \label{score}
p^*=\left\{
\begin{aligned}
&A_{i,j}+1     && ,\;if \; (i,j) \not\in \mathcal{E} \;\; {\rm \&} \;\; y_i\not =y_j \\
&A_{i,j}     && ,\;o.w.
\nonumber
\end{aligned}
\right.
\end{equation}

\end{proposition}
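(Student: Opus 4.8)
The plan is to chain together the local confidence comparison already derived in Equation \eqref{eq_example2} with Lemma \ref{lem1}, converting a statement about \emph{which perturbation most reduces prediction confidence} into a statement about \emph{which edge carries the extremal gradient saliency}, and then to invoke the greedy selection rule of Equation \eqref{eq_dec_func} to read off the optimal policy. Throughout I take the attack target $\mathcal{L}$ to be the prediction confidence $p(\cdot)$ on the correct label, which the attacker seeks to drive down; minimizing $p$ is the same objective as maximizing the cross-entropy attack loss of Equation \eqref{eq_atk_loss}, so Lemma \ref{lem1} applies with its minimum-seeking orientation intact.

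First I would fix the idealized target node of Figure \ref{fig1} and enumerate the admissible single-edge perturbations. Of the four elementary moves (lowering the weight of an edge to a class-1 or class-2 neighbor, or creating an edge to an unconnected class-1 or class-2 node), only two actually decrease the confidence at the true class and therefore qualify as sound attack strategies: $\delta_{(1)}$, adding an inter-class edge, and $\delta_{(2)}$, weakening an intra-class edge; the remaining two raise the confidence and can be discarded. Then, treating each candidate edge as one coordinate $a_k$ of the relaxed, non-discrete adjacency and identifying $P_{\mathcal{L},\delta a}(k)$ with the confidence change $p(\delta_{(\cdot)})-\frac{n_1}{n_1+n_2}$, Equation \eqref{eq_example2} gives $p(\delta_{(1)})-p(\delta_{(2)})<0$ in the limit $\delta\to0$. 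Hence the inter-class coordinate realizes the strict minimum of $P_{\mathcal{L},\delta a}$ over the sound directions.

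Next I would apply Lemma \ref{lem1} directly: since the perturbation function attains its minimum at the inter-class coordinate, the gradient $\nabla_a\mathcal{L}$ attains its minimum there as well, i.e. the most negative value and hence the largest magnitude among the sound directions. Because this coordinate corresponds to a currently absent edge ($A_{i,j}=0$, so $1-2A_{i,j}=+1$ in Equation \eqref{eq_dec_func}), the weighted saliency $\mathcal{A}_{i,j}(1-2A_{i,j})$ is maximized exactly when the attacker increments an inter-class pair. The greedy rule of Equation \eqref{eq_dec_func} therefore selects $A_{i,j}\mapsto A_{i,j}+1$ for a previously disconnected pair with $y_i\neq y_j$ and leaves every other entry fixed, which is precisely the policy $p^*$ in the statement.

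The hard part will be the passage from the single-target, label-propagation picture of Figure \ref{fig1} to the global claim over the whole adjacency under a genuine GCN surrogate; this is where the idealization hypothesis does the real work. I would argue that, to first order in $\delta$, the confidence of any node is governed by the intra- versus inter-class balance of its immediate neighborhood, so the pairwise comparison of Equation \eqref{eq_example2} applies coordinate-wise to every target node simultaneously, and summing these local contributions preserves the sign and keeps the inter-class coordinates extremal in the aggregate saliency. I would also make explicit the tacit assumptions that make the gradient well defined and the comparison clean: edges relaxed to continuous weights, a dominant neighbor class so that $n_1>n_2$ holds locally, and the verification that the symmetric update of Equation \eqref{eq_dec_func} does not change which coordinate is extremal. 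Stating and defending these hypotheses is what separates a rigorous argument from the heuristic reading, and it is the step I expect to require the most care.
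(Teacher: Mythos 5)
Your proposal is correct and follows essentially the same route as the paper: the paper's own justification of Proposition \ref{propo1} is exactly the chain you describe, namely the label-propagation scenario of Figure \ref{fig1} with the confidence comparison of Equation \eqref{eq_example2}, converted by Lemma \ref{lem1} into a statement about extremal gradient saliency, and then read off through the greedy rule of Equation \eqref{eq_dec_func}. The only difference is that you explicitly flag the passage from the single-target idealization to the global claim as a gap requiring work, whereas the paper simply stipulates an ``idealized graph'' and defers the non-idealized case to informal discussion of label noise and budget constraints.
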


We abstractly represent the disruption of model performance in terms of $\mathop{argmax}\mathcal{L}$.
This expression can also be replaced by $\mathop{argmin}\mathcal{L}_{atk}$ that the perturbations promote the fitting of the attack loss.
Attacking a real-world graph dataset is a non-idealized scenario.
First, real-world graph datasets suffer from label noise. 
Mislabeling leads to significant feature differences between intra-class nodes, which exhibit contamination of neighboring nodes after information aggregation.
Second, the attack on the real-world dataset is restricted to $p:A->A^{(2\Delta)}$, which allows only a few highly significant edges to be perturbed.
As a result, gradient-based edge perturbation manifests itself in real graph attack scenarios with a high probability of adding the inter-class edges.

\subsection{Surrogate Model with Multi-level Propagation}\label{4.3}

The tendency of gradient-based attackers to add inter-class edges has been illustrated in Section \ref{4.2}, which has a visible effect on the gradient of the nodes'  output distribution.
In addition, the difference in attributes between nodes is an essential reference for the attacker.
Edges connecting nodes with significant differences in attributes are more likely to pass contaminative neighbor features to both sides of the edge.
However, the message passing mechanism of the surrogate model, which is generally a Graph Convolutional Network (GCN), blurs the original attribute differences between nodes.
Message passing of a GCN is denoted as:
\begin{equation}
X^{(t+1)}=\tilde{D}^{-\frac{1}{2}} \tilde{A} \tilde{D}^{-\frac{1}{2}}X^{(t)},
\end{equation}
where  $\tilde{A}=A+I$ and $\tilde{D}_i=\sum_{j}A_{ij}$. We can rewrite GCN as:
\begin{equation}
X^{(t+1)}=(I-L_{sym})X^{(t)},
\end{equation} 
where $L_{sym}=\tilde{D}^{-\frac{1}{2}} \tilde{L} \tilde{D}^{-\frac{1}{2}}$, $\tilde{L}=\tilde{D}-\tilde{A}$.

\begin{theorem} \label{theo1}
$\mathcal{G}$ is a non-bipartite and connected graph with $n$ nodes $\mathcal{V}=\{v_1, \dots,v_n\}$, and $X_i^{(0)}$ is the attribute of node $v_i$. Then for $\tau$ large enough, after $\tau$ times message passing, $||X_i^{(\tau)}-X_j^{(\tau)}||_2 \le ||X_i^{(0)}-X_j^{(0)}||_2$.
\end{theorem}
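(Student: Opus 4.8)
The plan is to read the $\tau$-fold message passing as a power iteration. Since $X^{(t+1)}=(I-L_{sym})X^{(t)}$, setting $P := I-L_{sym}=\tilde{D}^{-1/2}\tilde{A}\tilde{D}^{-1/2}$ gives $X^{(\tau)}=P^{\tau}X^{(0)}$, so the statement reduces to understanding the large-$\tau$ behaviour of $P^{\tau}$. Because $P$ is symmetric I would work in its eigenbasis, treating each feature column of $X^{(0)}$ separately and recombining them at the end through the Euclidean norm.

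Next I would pin down the spectrum. $L_{sym}$ is symmetric positive semidefinite, and for a connected, non-bipartite graph its eigenvalues satisfy $0=\mu_1<\mu_2\le\cdots\le\mu_n<2$: connectivity makes the zero eigenvalue simple, and non-bipartiteness (already forced here by the self-loops in $\tilde{A}=A+I$, since a self-loop is an odd cycle) excludes the eigenvalue $2$. Hence $P$ has eigenvalues $1=\lambda_1>\lambda_2\ge\cdots\ge\lambda_n>-1$ with orthonormal eigenvectors $v_1,\dots,v_n$, where $v_1=\tilde{D}^{1/2}\mathbf{1}/\lVert\tilde{D}^{1/2}\mathbf{1}\rVert_2$. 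Writing $\rho:=\max(|\lambda_2|,|\lambda_n|)<1$ for the spectral gap, the decomposition $P^{\tau}=\sum_{k}\lambda_k^{\tau}v_kv_k^{\top}$ yields $\lVert P^{\tau}-v_1v_1^{\top}\rVert_2\le\rho^{\tau}\to 0$. Thus the rows collapse: with $c:=v_1^{\top}X^{(0)}$ one gets $X_i^{(\tau)}\to (v_1)_i\,c$, which already encodes the oversmoothing the theorem targets, since after enough passes every node attribute becomes essentially a common vector $c$ rescaled by the node's degree factor $(v_1)_i$.

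From here I would bound the pairwise difference by splitting $X_i^{(\tau)}-X_j^{(\tau)}$ into the surviving component along $v_1$ and a residual whose norm is controlled by $\rho^{\tau}$. The residual is made arbitrarily small by taking $\tau$ large, so the difference is driven by $((v_1)_i-(v_1)_j)\,c$. The cleanest route to a genuine contraction is through the quantity that provably shrinks under $P$, namely the component of $X$ orthogonal to $v_1$, equivalently the degree-normalized Dirichlet energy $\sum_{(i,j)\in\mathcal{E}}\lVert X_i/\sqrt{\tilde{D}_{ii}}-X_j/\sqrt{\tilde{D}_{jj}}\rVert_2^2$, which is multiplied by at most $\rho^{2}$ at every step.

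\emph{The main obstacle} is precisely this last translation from spectral decay to a bound on the \emph{raw} row differences. The leftover term $((v_1)_i-(v_1)_j)\,c$ is nonzero whenever $\tilde{D}_{ii}\ne\tilde{D}_{jj}$, so the unnormalized difference need not collapse to zero and, for a fixed pair whose initial gap $\lVert X_i^{(0)}-X_j^{(0)}\rVert_2$ is small, is not obviously dominated by it in general. I would therefore either phrase the contraction in the degree-normalized metric $\lVert X_i^{(\tau)}/\sqrt{\tilde{D}_{ii}}-X_j^{(\tau)}/\sqrt{\tilde{D}_{jj}}\rVert_2$, for which the bound follows directly from the geometric decay toward $v_1$, or invoke the approximate degree-regularity so that $(v_1)_i\approx(v_1)_j$ and the residual is absorbed into the initial difference. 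Securing the inequality in its literal unnormalized form is the delicate point; the convergence of $P^{\tau}$ itself is routine once the spectral gap is established.
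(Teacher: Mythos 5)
Your route is the same as the paper's: write the propagation as powers of $P=I-L_{sym}$, use connectivity and non\nobreakdash-bipartiteness to place the spectrum in $(-1,1]$ with the eigenvalue $1$ simple, expand the row difference in the eigenbasis, and let the sub-dominant components decay geometrically. Where you and the paper part ways is precisely at the step you flag as the main obstacle, and you are right to flag it: the paper's proof expands $X_i^{(\tau)}-X_j^{(\tau)}$ in the eigenbasis and then concludes the inequality directly from $|\lambda_k|<1$ for $k\le n-1$, silently discarding the $k=n$ term, which does not decay because $\lambda_n=1$. That term is exactly your surviving component $\bigl((v_1)_i-(v_1)_j\bigr)\,v_1^{\top}X^{(0)}$, which is nonzero whenever the augmented degrees of $i$ and $j$ differ and $v_1^{\top}X^{(0)}\ne 0$. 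Concretely, take $X_i^{(0)}=X_j^{(0)}$ with $\tilde{D}_{ii}\ne\tilde{D}_{jj}$: the right-hand side of the theorem is $0$, while
\begin{equation}
\bigl\Vert X_i^{(\tau)}-X_j^{(\tau)}\bigr\Vert_2 \;\longrightarrow\; \bigl|(v_1)_i-(v_1)_j\bigr|\,\bigl\Vert v_1^{\top}X^{(0)}\bigr\Vert_2 \;>\;0
\nonumber
\end{equation}
generically, so the literal unnormalized statement fails. In other words, the gap you identify is not a shortfall of your attempt relative to the paper; it is a genuine gap in the paper's own proof, and your two proposed repairs (state the contraction in the degree-normalized metric, or assume near degree-regularity so that $(v_1)_i\approx(v_1)_j$) are what would be needed to make the claim correct. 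The paper does neither.

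Two further points, both in your favor. First, you identify the top eigenvector correctly as $\tilde{D}^{1/2}\mathbf{1}$ (normalized); the paper writes $e_n=\tilde{D}^{-\frac{1}{2}}[1,\dots,1]^{T}$, which is an eigenvector of the random-walk operator $\tilde{D}^{-1}\tilde{A}$ applied on the left, not of the symmetric operator $I-L_{sym}$ used here. Second, your observation that the self-loops in $\tilde{A}=A+I$ already force non-bipartiteness is correct, and it makes the theorem's explicit non-bipartite hypothesis redundant for the GCN propagation matrix, a simplification the paper misses.
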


\begin{proof}
Let $(\lambda_1, \dots, \lambda_n)$ and $(e_1, \dots, e_n)$ denote the eigenvalue and eigenvector of matrix $I-L_{sym}$, respectively. According to the property of symmetric Laplacian matrix for connected graph.

$$
-1 < \lambda_1 < \lambda_2 < \cdots < \lambda_n=1 
$$

$$
e_n=\tilde{D}^{-\frac{1}{2}}[1, 1, \dots, 1]^{T}
$$

We can rewrite $X_i^{\tau}-X_j^{\tau}$ as:

$$
X_i^{(\tau)}-X_j^{(\tau)}=[(I-L_{sym})^{\tau}X]_i-[(I-L_{sym})^{\tau}X]_j
$$
$$
\\= [\lambda_1^{\tau}(e_{1}^{(i)}-e_{1}^{(j)}),\dots, \lambda_{n-1}^{\tau}(e_{n-1}^{(i)}-e_{n-1}^{(j)}),\lambda_{n}^{\tau}(e_{n}^{(i)}-e_{n}^{(j)})]\tilde{X}
$$

$e_{k}^{(i)}$ is the $i^{th}$ element of eigenvector $e_{k}$, $\tilde{X}$ is the coordinate matrix of $X$ in the space spanned by eigenvectors $(e_1, \dots, e_n)$.
Thus,

$$
X_{||i-j||}^{(\tau)} := ||X_i^{(\tau)}-X_j^{(\tau)}||_2= \sqrt{\sum_{m=1}^{p}[\sum_{k=1}^{n}\lambda_k^{\tau}(e_{k}^{(i)}-e_{k}^{(j)})\tilde X_{km}]^2}
$$

Because -$1 < \lambda_1 < \lambda_2 < \cdots < \lambda_{n-1} < 1$, thus for a large $\tau$,  $X_{||i-j||}^{(\tau)} < X_{||i-j||}^{(0)}$.

(Further proof is presented in Appendix A.1 in supplementary.)
\end{proof}

Theorem \ref{theo1} demonstrates that the embedding differences between nodes tend to shrink during information aggregation. 
This shrinking is not of equal magnitude for all node pairs but is specific to the eigenvector.
As the information is forwarded through GCNs, the differences in attributes between nodes are gradually reduced and distorted.
The loss of information in the forward process causes the back-propagated gradients not to contain the lost information.
This prevents gradient-based attackers from exploiting their strengths properly.

To solve this issue, we propose Surrogate Model with Multi-level Propagation (SMMP).
The $l$-th layer of a SMMP model consists of $L$ layers following the form:
\begin{equation} \label{concat}
\hat{H}^{k}_{l-1} = (\hat{A}^k H_{l-1}, BN(\hat{A}^k H_{l-1})),
\end{equation}
\begin{equation} \label{new_surrogate}
H_{l} = \sigma (CONCAT(H_{l-1},\hat{H}^{1}_{l-1},...,\hat{H}^{K}_{l-1} )W).
\end{equation}
In Equation \ref{concat}, $\hat{H}^{k}_{l-1}$ contains $k$-hop aggregated feature $\hat{A}^k H_{l-1}$ with its batch normalization by function $BN(\cdot)$.
The former propagates aggregated features to retain low-frequency information for downstream tasks, while the latter introduces regularization to amplify the difference in node features.
In Equation \ref{new_surrogate}, $\sigma(\cdot)$ denotes the activate function, $CONCAT(\cdot)$ represents concatenation, and $W$ denotes learnable matrices.
$H_{l-1}$ is the feature from the last layer, i.e., high-frequency information, and $\{\hat{H}^{1}_{l-1},... ,\hat{H}^{K}_{l-1}\}$ is the low-frequency information at different aggregation level from 1 to $K$.
Equation \ref{new_surrogate} allows the surrogate to have both node dissimilarity information for attacker and the discriminative information for simulating the victim model.

\subsection{Enhancing Imperceptibility in Perturbed Graph Homophily}\label{4.4}
Section \ref{4.2} has shown that gradient-based attackers tend to add inter-class edges.
A corollary of this phenomenon is an observable decrease in the graph homophily which is a property of the node-level classification task, which reflects the degree of reliability of the graph structure.

The graph homophily $h$ of a given graph $G$ is defined as the proportion of edges in the graph that connects nodes with the same class label (i.e., intra-class edges), defined as:
\begin{equation} \label{eq_homo_ratio}
    h=\frac{|\{(i,j):(i,j)\in \mathcal{E} \wedge y_i=y_j\}|}{|\mathcal{E}|},
\end{equation}
where $y_i$ is the ground truth or pseudo labels of node $v_i$.
Graphs with strong homophily have a high homophily ratio h→1, while graphs with strong heterophily (i.e., weak homophily) have a small homophily ratio h→0.
Additionally, graphs with a high homophily ratio are termed homophily/assortative graphs, and on the contrary, graphs with a low homology ratio are termed heterophily/disassortative graphs.

Note that the homophily $h$ is related to the label set $Y$ and the edge set $\mathcal{E}$ (i.e., the adjacent matrix $A$). 
We define the homophily ratio of the original graph as $h_Y(A)$ and the homophily ratio of the perturbed graph as $h_Y(A^{\Delta})$.
When the majority of perturbations are adding inter-class edges, the expression for the homophily ratio of the perturbation graph is:
\begin{equation}
h_Y(A^{\Delta})=h_Y(A)\frac{|E|}{|E|+\Delta}
\end{equation} \label{homo_perturbed}
The decrease in the homophily ratio of the perturbed graph relative to that of the original graph is:
\begin{equation} \label{homo_decrease}
h_Y(A)-h_Y(A^{\Delta})=h_Y(A)\frac{\Delta}{|E|+\Delta}
\end{equation}

Research on graph homophily has found that lower homophily tends to imply poorer accuracy because information aggregation introduces significant noise from neighboring nodes \cite{zhu2020beyond}.
In terms of the gradient-based edge perturbation, we ask two questions: (1) Is there a necessary connection between attack performance and the decline in graph homophily? (2) If the decline in homophily is restricted, can the attacker still exhibit attack performance? 
To answer the above questions, we propose a novel attack loss to restrict the homophily variance in the attack process.

The gray-box attack setting limits the agnostic nature of the victim model.
Besides, the ground-truth labels of the test nodes are unknown for both the attacker and the victim model.
Thus, the graph homophily should be calculated from the ground-truth labels of the training nodes and the model-specific pseudo-labels of the test nodes.
The homophily of the original graph is denoted as $h_{\hat{Y}}(A)$ and the homophily of the perturbed graph is denoted as $h_{\hat{Y}}(A^{\Delta})$, where $\hat{Y}=Y_{train} \cup Y'_{test}$, and $Y'_{test}$ denotes the pseudo-labels of test nodes.
The attacker simulates the case on the victim model with the surrogate model.
We expect the decrease in graph homophily of the surrogate model to be restricted by a proportionality parameter $\varepsilon$, denoted as
\begin{equation}\label{eq_homo_cons}
    h_{\hat{Y}}(A)-h_{\hat{Y}}(A^{\Delta})\leq \varepsilon h_{\hat{Y}}(A), \;\; \varepsilon < \frac{\Delta}{|E|+\Delta}.
\end{equation}

The perturbation types that make homophily rise are adding intra-class edges and removing inter-class edges.
In addition to adding inter-class edges, another way to make homophily decrease is to remove intra-class edges. To avoid manufactured decisions on which type of perturbation to adopt in each iteration, we propose a homophily-restricted attack loss to replace the original attack loss in Equation \ref{eq_atk_loss}.
Our proposed attack loss is expressed as:
\begin{equation}
\begin{aligned}
&\mathcal{L}_{atk} = \frac{1}{N}\sum_{v_i}^{N}{P(y_i|f_{\theta^{(t)}}(v_i))},
& \mathcal{L}_{hr} = \Big(\frac{\Vert AH \Vert_0}{\Vert A \Vert_0}-\frac{\Vert A^{(t)} H \Vert_0}{\Vert A^{(t)} \Vert_0}\Big)^2,\\
& \lambda_1 = \Big(1-\frac{h_{\hat{Y}}(A)-h_{\hat{Y}}(A^{(t)})}{\varepsilon h_{\hat{Y}}(A)}\Big)^2,
& \lambda_2 = \Big(\frac{h_{\hat{Y}}(A)-h_{\hat{Y}}(A^{(t)})}{\varepsilon h_{\hat{Y}}(A)}\Big)^2,\\
\end{aligned}
\nonumber
\end{equation}
\begin{equation}\label{eq_attack_loss}
     \mathcal{L}_{atk-hr} = \lambda_1\cdot\mathcal{L}_{atk} + \lambda_2\cdot\mathcal{L}_{hr}.
\end{equation}

In Equation \ref{eq_attack_loss}, $P(y_i|f_{\theta^{(t)}}(v_i))$ represents the confidence level of the surrogate model's prediction on the label class for node $v_i$, and $\mathcal{L}_{ce}$ represents the cross-entropy loss.
The form of $\mathcal{L}_{atk}$ is proposed by \cite{liu2022towards}, which solves a budget allocation problem in attack loss based on cross-entropy.
$\mathcal{L}_{hr}$ is the loss term to restrict graph homophily, where $H \in N\times N$ represents the label consistency between nodes. If node $v_i$ and node $v_j$ have the same label, then $H_{ij}=1$, otherwise $H_{ij}=0$. $H$ is calculated by the pseudo-label $\hat Y$.
The $\Vert AH \Vert_0$ indicates the number of intra-class edges in the graph, and $\Vert A \Vert_0$ indicates the number of edges in the graph.
Thus, term $\frac{\Vert AH \Vert_0 }{ \Vert A \Vert_0}$ is the graph homophily ratio (with self-loop included).
The goal of $\mathcal{L}_{hr}$ is to reduce the homophily difference between the original graph structure $A$ and the perturbed structure $A^{(t)}$ at the $t^{th}$ iteration.
It is worth noting that the attack loss does not aim to optimize the learnable parameters but aims to compute the gradients on the graph structure.
Thus, for both $\mathcal{L}_{hr}$ and $\mathcal{L}_{ce}$, we are merely concerned with their gradients on the graph structure (i.e., $\nabla_{A^{(t)}}\mathcal{L}_{hr}$ and $\nabla_{A^{(t)}}\mathcal{L}_{ce}$).
$\lambda_{1}$ and $\lambda_2$ are a pair of weight parameters.
When $h_{\hat{Y}}(A)\approx h_{\hat{Y}}(A^{(t)})$, we have $\lambda_{1}\rightarrow 1$ and $\lambda_{2}\rightarrow 0$, i.e., the attack at this iteration is not restricted by homophily.
When $h_{\hat{Y}}(A)-h_{\hat{Y}}(A^{(t)})\approx \varepsilon h_{\hat{Y}}(A)$, we have $\lambda_{1}\rightarrow 0$ and $\lambda_{2}\rightarrow 1$, i.e., the attack loss at this iteration focus on restricting graph homophily.
When $0<h_{\hat{Y}}(A)-h_{\hat{Y}}(A^{(t)})< \varepsilon h_{\hat{Y}}(A)$, the attack loss will be a compromise between the two loss terms.
With the synergy of $\lambda_{1}$ and $\lambda_2$, the decline in graph homophily can be restricted by $\varepsilon h_{\hat{Y}}(A)$.
The attacker can achieve edge perturbations that affect homophily decline to various levels by adjusting the parameters $\varepsilon$.
This approach also allows the attacker to trade-off between attack performance and imperceptibility of homophily.

\section{Experiments}

The implementation of our proposed surrogate model SMMP and the homophily restriction loss $\mathcal{L}_{atk-hr}$ is open-sourced\footnote{Code is avaliable at \href{https://drive.google.com/file/d/1Jm3-HwU8dUlZ0ImpqZTfGrsEHJUW2uoS/view?usp=share_link}{https://drive.google.com}}. 

\subsection{Experimental Settings}
To conduct a fair comparison with baselines, we adopt general settings in previous works.
The test scenario in this paper is a gray-box poisoning attack to reduce the classification accuracy of the test nodes in the graph dataset.
The datasets involved in this paper include Cora \cite{mccallum2000automating}, Cora-ML \cite{mccallum2000automating}, Citeseer \cite{sen2008collective}.
10\% of the nodes in the assortative datasets Cora, Cora-ML, and Citeseer are labeled, while 90\% of nodes are unlabeled.
The GSA baselines include Random and DICE \cite{waniek2018hiding} based on random sampling as well as EpoAtk \cite{lin2020exploratory}, Meta-Self \cite{zugner2019adversarial}, AtkSE \cite{liu2022gradients}, and GraD \cite{liu2022towards} based on gradient.
The attack budget is set as 5\% of the number of edges in the clean graph.
Experimental results show the mean and variance of the 10 times test results.

\subsection{Attack Performance Improved by SMMP}
In this section, we show the attack performance of our attack model with the baseline attack model on general GNNs.
We select GCN \cite{kipf2016semi}, GraphSAGE (G-SAGE) \cite{hamilton2017inductive} and GAT \cite{velickovic2017graph} as victim models.
This experiment aims to verify the effectiveness of the multi-level propagation in the surrogate model, thus confirming that the blurring of node dissimilarity information due to oversmoothing affects the attacker's judgment.
Table \ref{table1} shows the results of SMMP compared with baselines.

\begin{table}[t]
\scriptsize
\centering
\tabcolsep=0.066cm
\caption{Attack performance of our proposed SMMP and baseline methods on victim models GCN, GAT and GraphSAGE (G-SAGE). Experimental results are presented as classification accuracy (\%). The best result in each experimental group is bolded.}
\label{table1}
\begin{tabular}{lccccccccc}
\hline
\multicolumn{1}{l|}{}          & \multicolumn{3}{c|}{Cora}                                                       & \multicolumn{3}{c|}{Cora-ML}                                                    & \multicolumn{3}{c}{Citeseer}                                                   \\ \hline
\multicolumn{1}{l|}{Victim}    
& \multicolumn{1}{c}{GCN} & \multicolumn{1}{c}{GAT} & \multicolumn{1}{c|}{G-SAGE} & \multicolumn{1}{c}{GCN} & \multicolumn{1}{c}{GAT} & \multicolumn{1}{c|}{G-SAGE} & \multicolumn{1}{c}{GCN} & \multicolumn{1}{c}{GAT} & \multicolumn{1}{c}{G-SAGE} \\ \hline
\multicolumn{1}{l|}{Clean}     &81.7$\pm$0.3  &81.4$\pm$0.6  & \multicolumn{1}{l|}{80.8$\pm$0.4}  
& 84.0$\pm$0.4      & 82.3$\pm$0.3   &\multicolumn{1}{l|}{81.9$\pm$0.6 } &69.9$\pm$0.4     & 68.2$\pm$0.6    & 69.8$\pm$0.5                            \\
\multicolumn{1}{l|}{Random}    &81.2$\pm$0.3      &81.7$\pm$0.3   & \multicolumn{1}{l|}{81.2$\pm$0.3}       
& 82.8$\pm$0.4   & 83.0$\pm$0.5   & \multicolumn{1}{l|}{82.8$\pm$0.4}       &67.8$\pm$0.4        &69.0$\pm$0.3        & 67.8$\pm$0.4      \\
\multicolumn{1}{l|}{DICE}      &80.9$\pm$0.5      & 79.6$\pm$0.8      & \multicolumn{1}{l|}{80.0$\pm$0.4}       
&82.5$\pm$0.4       &80.9$\pm$0.5              & \multicolumn{1}{l|}{80.7$\pm$0.9}    
&69.0$\pm$0.4        &66.5$\pm$0.7        & 69.1$\pm$0.8          \\
\multicolumn{1}{l|}{EpoAtk}    &77.0$\pm$0.6      & 79.4$\pm$0.7     & \multicolumn{1}{l|}{78.9$\pm$0.8}       
&81.3$\pm$0.4     &79.4$\pm$0.6    & \multicolumn{1}{l|}{80.2$\pm$0.7}       & 66.3$\pm$0.4  & 66.9$\pm$0.6   & 68.8$\pm$0.5                  \\
\multicolumn{1}{l|}{Meta-Self} & 75.8$\pm$0.4   &77.5$\pm$0.4     & \multicolumn{1}{l|}{74.9$\pm$0.8}       & 76.2$\pm$0.3                        &74.2$\pm$0.5                         & \multicolumn{1}{l|}{76.3$\pm$1.6}       & 60.4$\pm$0.4     &60.3$\pm$0.8    &60.8$\pm$0.6                            \\
\multicolumn{1}{l|}{AtkSE}     &73.7$\pm$0.4        & 76.2$\pm$0.8          & \multicolumn{1}{l|}{73.3$\pm$0.6}       & 74.0$\pm$1.0     & 73.5$\pm$0.9     & \multicolumn{1}{l|}{75.4$\pm$1.0 }       & 59.5$\pm$0.5                   & 60.9$\pm$0.6   & 60.7$\pm$0.4                            \\
\multicolumn{1}{l|}{GraD}      &71.4$\pm$0.5     & 74.1$\pm$0.4      & \multicolumn{1}{l|}{70.6$\pm$1.1}       & 75.0$\pm$0.3       &    72.9$\pm$0.6    & \multicolumn{1}{l|}{74.6$\pm$1.0}       & 55.8$\pm$0.9    & 60.1$\pm$0.8       & 58.8$\pm$2.2       \\
\multicolumn{1}{l|}{SMMP}      & \textBF{68.7$\pm$0.7}      & \textBF{73.4$\pm$0.6}      & \multicolumn{1}{l|}{\textBF{67.0$\pm$1.0}}       & \textBF{68.2$\pm$0.7}        & \textBF{70.3$\pm$0.3}             & \multicolumn{1}{l|}{\textBF{68.1$\pm$1.0}}       &\textBF{55.7$\pm$0.4} 
&\textBF{56.9$\pm$1.0}        &\textBF{56.5$\pm$1.1}       \\ \hline           
\multicolumn{1}{l|}{Gain}      & \textBF{+2.7}      & \textBF{+0.7}      & \multicolumn{1}{l|}{\;\;\;\textBF{+3.6}}       & \textBF{+5.8}        & \textBF{+2.6}             & \multicolumn{1}{l|}{\;\;\;\textBF{+6.5}}       & \textBF{+0.1} 
&\textBF{+3.2}      &\textBF{+2.3}   \\\hline
\end{tabular}
\end{table}

Among the experiments, our proposed attack model SMMP outperforms baselines.
In most of the experimental groups, the attack performance of GraD is the best in the baseline.
For Cora dataset, our method performs 2.7\%, 0.7\% and 3.6\% better than the second-place method GraD on the victim model GCN, GAT and GraphSAGE.
Our method shows a significant improvement over previous methods on the Cora-ML dataset.
Ours outperforms the second-place method by 5.8\%, 2.6\%, and 6.5\% on GCN, GAT and GraphSAGE.
For Citeseer dataset, our method is 3.0\% and 2.1\% above the second-place method on GAT and GraphSAGE.
On GCN, our model slightly outperforms GraD by 0.1\%.
The experimental results show that our proposed SMMP becomes the latest state-of-the-art.
This result also provides support for our demonstration on the propagation of node dissimilarity and the effectiveness of our proposed multi-level propagation.

\subsection{Imperceptibility from Attack Loss $\mathcal{L}_{atk-hr}$}
We implement the attack loss with homophily restriction, $\mathcal{L}_{atk-hr}$, to our proposed SMMP to evaluate the enhancement of imperceptibility and the cost behind it.
Experiments intend to reveal the effect of $\mathcal{L}_{atk-hr}$ to maintain homophily of perturbed graphs as well as the loss on attack performance.
Experiments also aim at evaluating the compromise between attack performance and imperceptibility that the attacker is able to regulate through $\mathcal{L}_{atk-hr}$.
The hyperparameter $\varepsilon$ is essential to regulate the power of the homophily restriction.
We perform an ablation study with various levels of the constant $\varepsilon$, shown in Table \ref{table2}.

In Table \ref{table2}, the homophily ratio of the clean/perturbed graph is denoted as $h$.
The group 'Original' shows the result of using $\mathcal{L}_{atk}$ as attack loss, which is same with the implementation of Table \ref{table1}.
Note that $\mathcal{L}_{atk-hr}$=$\mathcal{L}_{atk}$ when $\varepsilon$$\rightarrow$+$\infty$.
We incrementally take the value of epsilon to observe its effect asymptotically.
From the results, a higher $\varepsilon$ implies a lower $h$, while a lower $h$ implies a lower classification performance.
The attacker still demonstrates attack performance under the homophily restriction. 
When $\varepsilon$=1\%, take Cora
dataset as an example, the homophily ratio $h$ is decreased by 0.017 than the Clean graph, which is 0.025 higher than the Original group.
In comparison, the decrease in homophily shrinks to 40\% of the original, implying a significant increase in imperceptibility.
In terms of attack performance, the accuracy of the perturbed graph at $\varepsilon$=1\% shrank to 8.8\%, 4.8\%, and 10.1\% on GCN, GAT, and GraphSAGE, compared to that of the clean graph.
Compared to the Original group, the attack performance shrinks to 68\%, 60\% and 73\% on the three datasets, respectively.
Thus, we are able to trade significant homophily imperceptibility gains by $\mathcal{L}_{atk}$ at the expense of a small fraction of attack performance.
With the gradual increase of $\varepsilon$, the attack performance is raised, accompanied by a shrinkage of imperceptibility gain.
The experiments validate the effectiveness of our proposed attack loss in enhancing the imperceptibility of attacks.
In addition, the attack loss allows the attacker to choose a compromise between attack performance and imperceptibility, enabling the attacker to be adapted to a wider range of scenarios.

\begin{table}[t]
\centering
\caption{Experimental results of applying $\mathcal{L}_{atk-hr}$ with various hyperparameter $\varepsilon$.
$h$ represents the homophily ratio (without counting self-loops) of the clean or perturbed graph calculated by ground-truth labels.}
\label{table2}
\tabcolsep=0.27cm
\begin{tabular}{llccccccc}
\hline
                          &          & Clean & Original & $\varepsilon$=1\% & $\varepsilon$=2\% & $\varepsilon$=3\% & $\varepsilon$=4\% \\ \hline
\multirow{4}{*}{Cora}     & $h$ & 0.744      & 0.702         & 0.727    & 0.721    & 0.716    & 0.712 \\ 
\cline{2-8}
                          & GCN      & 81.7\%      & 68.7\%         & 72.9\%    & 72.6\%    & 71.7\%    & 70.8\%       \\
                          & GAT      & 81.4\%      & 73.4\%         & 76.6\%    & 75.8\%    & 75.5\%    & 75.4\%      \\
                          & G-SAGE   & 80.8\%      & 67.0\%         & 70.7\%    & 69.6\%    & 69.4\%    & 69.1\%     \\ \hline
\multirow{4}{*}{Citeseer} & $h$ & 0.587      & 0.549         & 0.569    & 0.567    & 0.564    & 0.560      \\
\cline{2-8}
                          & GCN      & 69.9\%      & 55.7\%         & 61.4\%    & 60.7\%    & 58.9\%    & 58.0\%       \\
                          & GAT      & 68.2\%      & 56.9\%         & 60.3\%    & 59.4\%    & 59.3\%    & 59.2\%       \\
                          & G-SAGE   & 69.8\%      & 56.5\%         & 60.6\%    & 59.6\%    & 59.1\%    & 58.9\%      \\ \hline
\end{tabular}
\end{table}

\begin{figure}[t]
    \centering
    \includegraphics[width = \hsize]{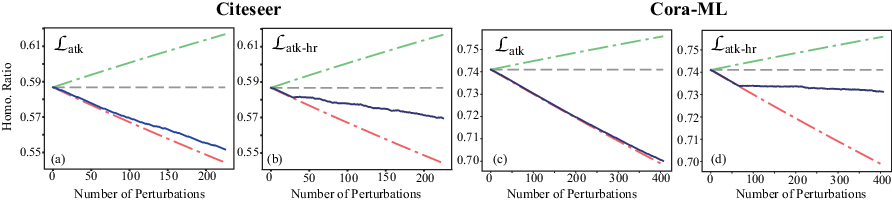}
    \caption{The dynamics of the graph homophily ratio $h$ under $\mathcal{L}_{atk}$ and $\mathcal{L}_{atk-hr}$ on the datasets Citeseer and Cora-ML. Blue line: $h$ of the perturbed graph.; Green and red lines: the upper and lower limits at the time step; Grey line: $h$ of clean graph.}
    \label{fig3}
\end{figure}

To further demonstrate the effect of our proposed attack loss, we visualize the trend of the homophily ratio $h$ during training.
Figure \ref{fig3} shows the dynamics of the homophily ratio $h$ during the attack process of attacker with $\mathcal{L}_{atk}$ and $\mathcal{L}_{atk-hr}$ on the datasets Citeseer and Cora-ML.
The green and red dashed lines represent the upper and lower limits of $h$ at the corresponding attack iteration. 
The gray dashed line indicates the $h$ of the unperturbed graph.
It can be observed from Figure \ref{fig3}(a)(c) that applying $\mathcal{L}_{atk}$, the trend of homophily ratio $h$ is close to the lower limits of $h$, which verifies that the gradient-based attackers tend to add inter-class edges.
Figure \ref{fig3}(b)(d) are collected with $\mathcal{L}_{atk-hr}$ at $\varepsilon$=2\%.
In (b) and (d), the decrease of $h$ becomes flat after triggering a boundary.
This is due to the decrease in homophily triggering $(1-\varepsilon) h$, which results in higher weights for the restriction terms and lower weights for the attack terms in the proposed attack objective.
With this experiment, we empirically demonstrate the attacker's tendency to add inter-class edges, as well as further demonstrate the effectiveness of our proposed attack loss.

\section{Related Work}

There are various scenarios of graph adversarial attack depending on the attacker's knowledge, target, and attack scenario \cite{jin2020adversarial,sun2022adversarial,xu2020adversarial}.
The attacker's knowledge can be classified into white-box, gray-box and black-box attacks.
In the gray-box attack that is considered in this paper, the training labels are visible to the attacker.
The attacker's target include targeted attack and untargeted attack.
It can be further classified into the evasion and poisoning attack according to whether the victim model is retrained using the perturbed graph.
Existing attack methods are divided into two main categories: graph modification attack (GMA) and graph node injection (GNI) \cite{jin2020adversarial}.
GMA only allows an attacker to change the properties of existing nodes and edges \cite{lin2020exploratory,ma2020towards}.
Most GMA studies are devoted to attacking GNNs by modifying the graph structure, which is denoted by graph structure attack (GSA). 
Among the GSA methods, gradient-based attacks are one of the dominant methods, first proposed by \cite{zugner2019adversarial}.
Subsequent work focus on the improvements in attack loss and gradient saliency \cite{liu2022gradients,liu2022towards}.

Several graph structure attack researchers have analyzed and interpreted the attack performance and the propensity of the attack model.
Zugner et al. mention in \cite{zugner2018adversarial} that high-degree nodes have better accuracy in both the clean and perturbed graphs.
Wu et al. \cite{wu2019adversarial} observe that gradient-based attackers tend to add edges rather than delete edges.
Xu et al. \cite{xu2019topology} point out that part of the attacker's budget is allocated to nodes that have been misled and propose a Carlile-Wagner-type loss.
Papers \cite{geisler2021robustness,liu2022towards} argue that the attack loss based on cross-entropy is unreasonable.
In addition to attack performance, researches focus on the imperceptibility of the attacks \cite{chen2020survey,sun2018adversarial}.
GSA is generally restricted by the $\ell_0$ norm of the change in the adjacent matrix.
Dai et al. \cite{dai2018adversarial} propose to utilize a benchmark GNN as an evaluation criterion for imperceptibility in white-box attacks.
Ma et al. \cite{ma2019attacking} propose a rewiring approach to attack graph structure in graph classifications.
Chen et al. \cite{chen2022understanding} notice the importance of homophily in GNI and restrict the homophily calculated by the inner product of node features.

\section{Conclusion} 

This paper demonstrates that the mechanism of a gradient-based attacker is to add inter-class edges, which is due to the information aggregation of GNNs.
On this basis, we argue that connecting dissimilar nodes is the expected attack strategy.
However, we theoretically find that the node dissimilarity information is blurred due to aggregation of previous surrogate model.
To address this issue, we improve the surrogate model with multi-level propagation, named SMMP, which preserves the dissimilarity information of high frequency and the discriminative information of low frequency.
Besides, we propose an attack loss with homophily restriction to improve the attack imperceptibility.
The compromise between imperceptibility and attack performance is flexibly controlled by a hyperparameter $\varepsilon$.
We conduct rich experiments, and the results validate the effectiveness of our proposed SMMP and the attack loss with homophily restriction.


\bibliographystyle{splncs04}
\bibliography{ecml23}
\end{document}